\documentclass{article}

\usepackage[T1]{fontenc}
\usepackage[utf8]{inputenc}
\usepackage[a4paper,left=1.8cm,right=1.8cm,top=2cm,bottom=2cm]{geometry}
\usepackage{amsmath, amssymb, amsfonts, amsthm}
\usepackage{mathtools}
\usepackage{graphicx}
\usepackage{booktabs}
\usepackage{multirow}
\usepackage{siunitx}
\usepackage{hyperref}
\usepackage{xcolor}
\usepackage{enumitem}

\newcommand{\noise}{\eta}

\newcommand{\radon}{\mathcal{R}}

\newcommand{\ee}{e}
\newcommand{\signal}{x}
\newcommand{\data}{y}

\newcommand{\So}{\mathcal{S}}
\newcommand{\Mo}{\mathcal{M}}
\newcommand{\derivative}{\mathcal{U}}

\newcommand{\forward}{\mathcal{A}}
\newcommand{\rec}{\mathcal{B}}

\newcommand{\eexp}{\exp}
\newcommand{\llog}{\log}

\newcommand{\R}{\mathbb{R}}

\newcommand{\E}{\mathbb{E}}
\newcommand{\loss}{\mathcal{L}}

\newcommand\set[1]{\{#1\}}

\DeclareMathOperator{\psnr}{PSNR}

\newtheorem{theorem}{Theorem}
\newtheorem{definition}{Definition}
\newtheorem{prop}{Proposition}

\newenvironment{myitem}
  {\begin{itemize}[leftmargin=2em,itemsep=0.5em]}
  {\end{itemize}}

\newenvironment{tritemize}
  {\begin{itemize}[leftmargin=2em,itemsep=0.4em,label=$\blacktriangleright$]}
  {\end{itemize}}

\title{SPLIT: Self-supervised Partitioning for Learned Inversion in Nonlinear Tomography}
\author{Markus Haltmeier, Lukas Neumann, Nadja Gruber, Gyeongha Hwang}
\date{}

\begin{document}

\maketitle

\begin{abstract}
Machine learning has achieved impressive performance in tomographic reconstruction, but supervised training requires paired measurements and ground-truth images that are often unavailable. This has motivated self-supervised approaches, which have primarily addressed denoising and, more recently, linear inverse problems. We address nonlinear inverse problems and introduce SPLIT (Self-supervised Partitioning for Learned Inversion in Nonlinear Tomography), a self-supervised machine-learning framework for reconstructing images from nonlinear, incomplete, and noisy projection data without any samples of ground-truth images. SPLIT enforces cross-partition consistency and measurement-domain fidelity while exploiting complementary information across multiple partitions. Our main theoretical result shows that, under mild conditions, the proposed self-supervised objective is equivalent to its supervised counterpart in expectation. We regularize training with an automatic stopping rule that halts optimization when a no-reference image-quality surrogate saturates. As a concrete application, we derive SPLIT variants for multispectral computed tomography. Experiments on sparse-view acquisitions demonstrate high reconstruction quality and robustness to noise, surpassing classical iterative reconstruction and recent self-supervised baselines.
\end{abstract}

\paragraph{Keywords.}
Self-supervised learning, nonlinear inverse problems, tomographic reconstruction, multispectral CT, nonlinear imaging, material decomposition, Radon transform, Poisson noise, sparsity

\paragraph{MSC2020.}
65R32, 44A12, 68T07

\section{Introduction}

Accurate image reconstruction is a fundamental task in computed tomography (CT) and elsewhere, critical for medical diagnosis and industrial applications. In this paper, we study general tomographic reconstruction problems of the form of a nonlinear inverse problem
\begin{equation}\label{eq:ip}
  \data = \forward(\signal) + \noise,
\end{equation}
where $\forward \colon X \to Y$ is the (possibly) nonlinear forward operator between finite-dimensional Hilbert spaces, $\signal \in X$ the unknown image, and $\data \in Y$ the observed measurement data. The term $\noise \in Y$ models potentially signal-dependent measurement noise; in practice, photon statistics are often Poisson and electronic noise is additive. While this work is motivated by and tested on multispectral CT (MSCT), the proposed reconstruction framework is general, with focus on non-linear tomography.

Classical linear tomography relies on inverting the linear Radon transform, which models internal structure as a scalar-valued discrete attenuation map $\mu \in \mathbb{R}^N$ recovered from a noisy version of its sinogram $\radon \mu \in \mathbb{R}^P$, where $\radon$ denotes the discrete Radon transform. While effective in many settings, this linear model neglects crucial physical realities: the polychromatic nature of X-ray sources and the energy-dependent attenuation of materials. Real samples are better described by a family of attenuation maps $\mu(\ee) \in \mathbb{R}^N$, parameterized by photon energy $\ee \in (0,\infty)$. Using a single energy bin merges these distinct maps into a single projection, causing non-uniqueness and severe artifacts such as beam hardening. Although iterative and analytic correction methods exist \cite{mcdavid1975spectral,kiss2023beam,pan2008anniversary,herman1979correction,van2011iterative,rigaud2017analytical}, they only partially mitigate these issues for a single X-ray source. Multispectral CT (MSCT) addresses these challenges by acquiring projection data across multiple energy bands, enabling reconstruction of energy-dependent attenuation maps. This yields a fundamentally nonlinear and more complex inverse problem \cite{kazantsev2018joint,rigie2015joint,hu2019nonlinear,arridge2021overview,Mory_2018,heismann2009quantitative,maass2009image,barber2016algorithm,bevilacqua2023regularized}. 

The MSCT image reconstruction problem can be written in the form \eqref{eq:ip}, where the unknown $\signal = (\signal[1], \dots, \signal[M]) \in X = \mathbb{R}^{N \times M}$ consists of a stack of material density images and the data $\data = (\data[1], \dots, \data[B]) \in Y = \mathbb{R}^{P \times B}$ are multispectral measurements across $B$ energy channels. The forward map $\forward = \Phi \circ \radon$ combines the Radon transform $\radon \colon \mathbb{R}^{N \times M} \to \mathbb{R}^{P \times M}$ with a pointwise nonlinearity $\Phi \colon \mathbb{R}^M \to \mathbb{R}^B$ that maps, per ray, $M$ line integrals to $B$ spectral measurements. Classical analytic and iterative methods, as well as supervised learning, have been applied to \eqref{eq:ip}, but supervised approaches require paired $(\signal,\data)$, which are difficult to obtain. Self-supervised methods \cite{batson2019noise2self,hendriksen2020noise2inverse,gruber2024sparse2inverse,schut2025equivariance2inverse,unal2024proj2proj,yaman2020self,gruber2025noisier2inverse,chen2022robust} alleviate this by training only from measurements $\data$. In particular, relatives of our approach have shown success in denoising \cite{batson2019noise2self}, full-data inverse problems \cite{hendriksen2020noise2inverse}, and linear settings with fixed, undersampled operators \cite{gruber2024sparse2inverse,unal2024proj2proj}. However, to the best of our knowledge, no self-supervised approaches exist for nonlinear tomography.

In this paper, we propose a self-supervised framework for tomographic inverse problems that addresses nonlinear forward models, undersampling, and noise. We introduce SPLIT (Self-supervised Partitioning for Learned Inversion in Nonlinear Tomography), which allows multiple data splits. SPLIT partitions the data domain, targets reconstruction in image space, and employs a loss in the measurement domain. In its single-partition version, similar to Sparse2Inverse \cite{gruber2024sparse2inverse}, SPLIT uses one partition along the angular dimension. To adequately account for noise and undersampling, we also introduce Double-SPLIT variants, which introduce a second partition along the detector (projection) dimension. We consider these methods within the SPLIT formulation for general nonlinear problems, potentially addressing multiple challenges present in the data. To the best of our knowledge, such multi-partition strategies have not been applied in self-supervised reconstruction before. We implement SPLIT for MSCT with a partition size of two in each dimension and demonstrate improvements over the self-supervised baseline. While our primary target is sparse-angle MSCT, these approaches apply to any linear or non-linear forward model that admits multiple data partitions.

\section{SPLIT Framework}

In this section we introduce the proposed SPLIT technique. We consider general inverse problems \eqref{eq:ip}, where $\forward \colon \R^{N} \to \R^{P}$ is a deterministic possibly nonlinear forward map. We assume the unknown $\signal$ and the noise $\noise$ are random and follow unknown distributions $p_\signal$ and $p_\noise$. The proposed SPLIT technique is inspired by Sparse2Inverse; however, it is applicable to nonlinear problems. Moreover, opposed to Sparse2Inverse we allow multiple partitions, which, to the best of our knowledge, has not been explored in the literature on self-supervised imaging.

\subsection{Self-supervised imaging}
\label{ssec:self}

Recently, self-supervised methods have been developed for linear inverse problems: Noise2Self for denoising, Noise2Inverse for full-data inverse problems, and Sparse2Inverse for limited-data inverse problems. Throughout this subsection we assume that $x \in X = \R^N$ and $y \in Y = \R^P$ are random variables with joint distribution $P_{x,y}$. For any $z \in \R^P$ and any $\Omega \subseteq \{1,\dots,P\}$, denote
\[
z_\Omega = (z_p)_{p \in \Omega}, \qquad z_\Omega^c = (z_p)_{p \notin \Omega}.
\]
Self-supervised imaging methods are based on diagonal-free maps with respect to certain partitions of $\{1,\dots,P\}$ as outlined in the following.

\begin{myitem}
 
\item \textsc{Diagonal-free maps:}
The crucial concept of self-supervised methods is diagonal-free nonlinear maps. Diagonal-free nonlinear maps, termed $\mathcal{J}$-invariant in the Noise2Self framework \cite{batson2019noise2self} for self-supervised denoising, formalize the idea that each output coordinate must not depend on the corresponding input coordinate. Closely related structural constraints on operators appear in functional analysis and numerical analysis through off-diagonal or neighbor-only operators and sparsity patterns \cite{davis2006direct}, as well as in dynamical systems through coupling structures and coordinate-wise dependence restrictions in discrete-time maps \cite{brin2002introduction}.

\begin{definition}[Diagonal free nonlinear maps]
Let $g \colon \mathbb{R}^P \to \mathbb{R}^P$, 
$\Lambda \subseteq \{1, \dots, P\}$ and  $\mathcal{P}$ be a partition of $\{1, \dots, P\}$. The function $g$ is called \emph{$\Lambda$-self-invariant} if 
\(
g(z)_\Lambda \text{ depends only on } z_\Lambda^c
\). It  is called   diagonal-free (with respect to $\mathcal{P}$) if it is $\Omega$-self-invariant for every $\Omega \in \mathcal{P}$.
\end{definition}

As a simple example consider $P = 3$ and a nonlinear map $g \colon \mathbb{R}^3 \to \mathbb{R}^3$ defined by
\[
g(z_1,z_2,z_3) = 
\begin{pmatrix}
\sin(z_2 + z_3) \\
z_1 + \exp(z_3) \\
\cos(z_1 - z_2)
\end{pmatrix}.
\]
Then, with the partition $\mathcal{P} = \{\{1\}, \{2\}, \{3\}\}$, $g$ is diagonal-free: each component $g_i$ depends only on the other two components with $z_j$ with $j \in \set{1,2,3} \setminus \set{i}$ and not on $z_i$ itself.  More generally, for any functions $f, h_\Omega \colon \R^P \to \R^P$ and any partition $\mathcal{P}$, a diagonal-free map $g \colon \mathbb{R}^P \to \mathbb{R}^P$ with respect to $\mathcal{P}$ can be defined by    
\begin{equation} \label{eq:unetJ}   
\forall \Omega \in \mathcal{P} \colon \qquad 
\big(g(y)\big)_\Omega \coloneqq \big(f\big( \Mo_{\Omega}\, h_\Omega(\Mo_\Omega^c y) + \Mo_\Omega^c\, y \big)\big)_\Omega \,,
\end{equation}
where $\Mo_{\Omega}, \Mo_{\Omega }^c \in \{0,1\}^{P \times P}$ are diagonal masking matrices that zero out entries outside or inside $\Omega$, respectively. Then, by construction, $g$ is diagonal-free with respect to $\mathcal{P}$. In typical applications, $\Mo_{\Omega}\, h_\Omega(\Mo_\Omega^c\, y)$ is either the zero function or a classical interpolation (e.g., local averaging) from the complement $\Omega^c$ to the target region $\Omega$, and $f$ is a neural network whose diagonal-free variant is given by $g$.

 \item \textsc{Noise2Self:} Consider first the denoising problem, which is  \eqref{eq:ip} with $\forward = \mathrm{id}$ and $P=N$. The reconstruction function $g$ is then a denoiser, ideally minimizing the supervised risk $\E \|g(y) - x\|^2$. Instead, Noise2Self minimizes the self-supervised loss $\E \|g(y) - y\|^2$ over diagonal-free functions. As shown in the seminal work \cite{batson2019noise2self}, this is justified by the following result.

\begin{prop}[Noise2Self]\label{prop:n2s}
Let $\noise$ be pairwise independent with zero mean and let $g \colon \R^N \to \R^N$ be diagonal-free with respect to $\mathcal{P}$. Then,  \(
\E \|g(y) - y\|^2 \;=\; \E \|g(y) - x\|^2 \;+\; \E \|\noise\|^2 \).
\end{prop}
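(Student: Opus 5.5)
Write $y = x + \noise$ and expand
\[
\E\|g(y)-y\|^2 = \E\|g(y)-x\|^2 + \E\|\noise\|^2 - 2\,\E\langle g(y)-x,\noise\rangle .
\]
The whole proof then reduces to showing that the cross term vanishes. Because $\mathcal{P}$ partitions $\{1,\dots,N\}$, the inner product splits blockwise:
\[
\E\langle g(y)-x,\noise\rangle=\sum_{\Omega\in\mathcal{P}}\E\langle g(y)_\Omega - x_\Omega,\noise_\Omega\rangle .
\]
So it suffices to prove that each block term is zero.

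Fix $\Omega\in\mathcal{P}$. By diagonal-freeness, $g(y)_\Omega$ is a function of $y_\Omega^c = x_\Omega^c+\noise_\Omega^c$ alone. The plan is to condition on $x$ and use the tower property:
\[
\E\langle g(y)_\Omega - x_\Omega,\noise_\Omega\rangle=\E\Big[\E\big[\langle g(y)_\Omega - x_\Omega,\noise_\Omega\rangle \,\big|\, x\big]\Big].
\]
Given $x$, the vector $g(y)_\Omega-x_\Omega$ is a function of $\noise_\Omega^c$ only. If $\noise_\Omega$ is independent of $\noise_\Omega^c$ conditionally on $x$, and $\E[\noise_\Omega\mid x]=0$, the inner conditional expectation factorizes as
\[
\big\langle \E[g(y)_\Omega-x_\Omega\mid x],\,\E[\noise_\Omega\mid x]\big\rangle=0 .
\]
Summing over $\Omega\in\mathcal{P}$ gives zero cross term, and the stated identity follows.

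The main obstacle is interpreting the hypotheses precisely enough to justify this factorization. Pairwise independence of the individual coordinates does not literally imply that the block $\noise_\Omega$ is independent of the block $\noise_\Omega^c$. I would therefore read the assumption in the Noise2Self sense: the noise blocks $(\noise_\Omega)_{\Omega\in\mathcal{P}}$ are mutually independent, independent of $x$ (or at least conditionally independent given $x$), and have zero (conditional) mean. This reading covers the intended i.i.d.\ per-pixel noise. I would state it explicitly, as in \cite{batson2019noise2self}.

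Finally, I would record the finiteness assumptions $\E\|g(y)\|^2,\ \E\|x\|^2,\ \E\|\noise\|^2<\infty$. These make every expectation above finite and justify the splitting and conditioning steps.
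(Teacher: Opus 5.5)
Your argument is correct and is the standard one. The paper gives no proof of this proposition itself (it cites \cite{batson2019noise2self}), but its proof of Theorem~\ref{thm:split} uses exactly your expand-the-square, kill-the-cross-term argument, relying on the fact that the prediction on $\Omega$ does not depend on $\noise_\Omega$.

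Your caveat about the hypothesis is also justified, and the paper glosses over it: literal pairwise independence does not suffice. For example, take $\noise_1,\noise_2$ i.i.d.\ Rademacher, $\noise_3=\noise_1\noise_2$, $x=0$, and the diagonal-free map $g(z)=(0,0,z_1z_2)$. Then $\E\|g(y)-y\|^2=2$, whereas $\E\|g(y)-x\|^2+\E\|\noise\|^2=4$. So the independence of $\noise_\Omega$ from $(x,\noise_\Omega^c)$ with zero mean, which you impose, is the right assumption.
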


Based on Prop.~\ref{prop:n2s}, the risk $\E \|g(y) - x\|^2$ is minimized by the same $\mathcal{P}$-invariant function as the self-supervised risk $\E \|g(y) - y\|^2$. As demonstrated in \cite{batson2019noise2self}, the performance of these estimators is very close to supervised ones.

\item \textsc{Noise2Inverse:}
Next, consider linear inverse problems written in the form \eqref{eq:ip} with a linear map $\forward$. Noise2Self can be applied either in the measurement domain for denoising $\data$ followed by image formation, or in the reconstruction domain for denoising an initial reconstruction $\rec \data$. Both approaches suffer from a two-step nature and further drawbacks, such as correlation of the noise in the reconstruction domain $X$ and less informative   prior in the measurement domain $Y$. To address these issues, \cite{hendriksen2020noise2inverse} proposes Noise2Inverse, specifically where $\forward$ is the full-view Radon transform. Here, a partition $\mathcal{P}$ is used in the projection domain where the noise is statistically independent, and a network is trained in the reconstruction domain. For any subset $\Omega \in \mathcal{P}$, Noise2Inverse considers linear reconstructions $\rec_\Omega, \rec_\Omega^c$ using data $\data_\Omega, \data_\Omega^c$. Then a network $f \colon \R^N \to \R^N$ is constructed by minimizing
\(
\E
\|
f(\rec_{\Omega}^c \data_{\Omega}^c )
-
\rec_{\Omega} \data_{\Omega} 
\|^2
\).
Noise2Inverse is supported by the following result derived in \cite{hendriksen2020noise2inverse}.

\begin{prop}[Noise2Inverse]\label{prop:n2i}
Suppose $\noise$ has zero mean and is pairwise independent. Then, for any $f \colon \R^N \to \R^N$, we have           
\(
\E \|
f(\rec_{\Omega}^c \data_{\Omega}^c )
-
\rec_{\Omega} \data_{\Omega}
\|^2
=
\E \|
f(\rec_{\Omega}^c \data_{\Omega}^c)
-
\rec_{\Omega}\forward_{\Omega}\signal
\|^2
+ \E \| \rec_{\Omega} \noise \|^2
\).
\end{prop}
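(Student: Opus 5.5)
Since $\forward$ is linear, we have $\data_\Omega = \forward_\Omega \signal + \noise_\Omega$, where $\forward_\Omega$ denotes the rows of $\forward$ indexed by $\Omega$. Linearity of $\rec_\Omega$ then gives $\rec_\Omega \data_\Omega = \rec_\Omega \forward_\Omega \signal + \rec_\Omega \noise_\Omega$; we write $\rec_\Omega\noise$ for $\rec_\Omega \noise_\Omega$, as in the statement. Set $u \coloneqq f(\rec_\Omega^c \data_\Omega^c) - \rec_\Omega\forward_\Omega\signal$. Then the residual in the self-supervised loss is $u - \rec_\Omega\noise_\Omega$, and expanding the square gives
\[
\E\|u - \rec_\Omega \noise_\Omega\|^2 = \E\|u\|^2 - 2\,\E\langle u, \rec_\Omega \noise_\Omega\rangle + \E\|\rec_\Omega\noise_\Omega\|^2 .
\]
It therefore suffices to show that the cross term vanishes.

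The key observation is how $u$ depends on the randomness. It depends only on $\signal$ and on $\data_\Omega^c = \forward_\Omega^c \signal + \noise_\Omega^c$, that is, on $(\signal, \noise_\Omega^c)$. The other factor, $\rec_\Omega\noise_\Omega$, is a linear function of $\noise_\Omega$ alone. I would use the intended reading of the hypotheses: the noise is independent of $\signal$ and has zero mean, and the noise blocks on $\Omega$ and $\Omega^c$ are independent (this is how ``pairwise independent'' must be read, i.e.\ $\noise_\Omega$ is independent of $(\signal,\noise_\Omega^c)$). Under this reading, conditioning on $(\signal,\noise_\Omega^c)$ gives
\[
\E\langle u, \rec_\Omega\noise_\Omega\rangle = \E\big\langle u, \rec_\Omega\,\E[\noise_\Omega \mid \signal,\noise_\Omega^c]\big\rangle = \E\langle u, \rec_\Omega\,\E\noise_\Omega\rangle = 0 .
\]
The cross term drops out and the identity follows.

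The main obstacle is the independence step. Strictly pairwise (coordinatewise) independence is not enough on its own: the cross term expands into sums of terms $\E[\phi(\noise_{\Omega^c}, \signal)\,\noise_p]$, and making these vanish needs $\noise_p$ independent of the whole block $(\signal, \noise_{\Omega^c})$, not just of each coordinate separately. I would therefore state explicitly that the noise is independent across the two blocks of the partition and independent of $\signal$, as in Noise2Self (Prop.~\ref{prop:n2s}), whose proof has the same structure.

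I would also note that integrability is needed, namely finite second moments of $\noise$ and of $f(\rec_\Omega^c\data_\Omega^c)$, so that the expansion and the conditioning are justified. The remaining steps are routine.
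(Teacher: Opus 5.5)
Your proof is correct and follows the same route as the paper's argument for Theorem~\ref{thm:split}; the paper gives no separate proof of this proposition and defers to \cite{hendriksen2020noise2inverse}. As there, you use linearity of $\rec_\Omega$ to write the target as $\rec_\Omega\forward_\Omega\signal + \rec_\Omega\noise_\Omega$, expand the square, and drop the cross term because the network input depends only on $(\signal,\noise_\Omega^c)$, while $\noise_\Omega$ is mean-zero and independent of that pair.

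Your remark that literal pairwise independence is not enough for nonlinear $f$ is accurate; take $\noise_3=\noise_1\noise_2$ with independent Rademacher $\noise_1,\noise_2$. The fix you propose, independence of $\noise_\Omega$ from $(\signal,\noise_\Omega^c)$, is exactly what the paper's own cross-term step tacitly assumes.
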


Based on Prop.~\ref{prop:n2i}, the supervised risk
\(
\E \|
f(\rec_{\Omega}^c \data_{\Omega}^c)
-
\rec_{\Omega} \forward_{\Omega}\signal
\|^2
\)
and the self-supervised risk
\(
\E \|
f(\rec_{\Omega}^c \data_{\Omega}^c)
-
\rec_{\Omega} \data_{\Omega}
\|^2
\)
share the same minimizer. After computing the minimizer within a chosen function class, inference is  done by
\(
\rec(\data)
\coloneq
|\mathcal{P}|^{-1}
\sum_{\Omega \in \mathcal{P}}
f(\rec_{\Omega}^c \data_{\Omega}^c)
\).

\item \textsc{Sparse2Inverse:}
Noise2Inverse aims to remove measurement noise but is not designed to remove artifacts resulting from undersampling in the case of sparse data, when $\forward$ has a large null space. The reason is that the targets in this framework are defined using linear reconstruction methods and thus still fix the null-space component shared by any of the maps $\rec_{\Omega}\forward_{\Omega}$. To overcome this issue, in \cite{gruber2024sparse2inverse} we presented a method that uses a network in reconstruction domain but with a loss in the measurement domain (also see \cite{unal2024proj2proj}).
More specifically, Sparse2Inverse minimizes
\(
\E \| \forward_{\Omega} f(\rec_{\Omega}^c \data_{\Omega}^c) - \data_{\Omega} \|^2
\)
based on the following result.

\begin{prop}[Sparse2Inverse]\label{prop:s2i}
Suppose $\noise$ is pairwise independent with zero mean.
Then for any $f \colon \R^N \to \R^N$, we have           
\(
\E \|
\forward_{\Omega} f(\rec_{\Omega}^c \data_{\Omega}^c)
-
\data_{\Omega}
\|^2
=
\E \|
\forward_{\Omega} f(\rec_{\Omega}^c \data_{\Omega}^c)
-
\forward_{\Omega}\signal
\|^2
+ \E \| \noise \|^2
\).
\end{prop}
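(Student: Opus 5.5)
The plan is a bias–variance expansion around the noiseless data $\forward_\Omega \signal$, followed by showing that the cross term vanishes by independence. Since $\forward$ is linear here, $\data_\Omega = \forward_\Omega \signal + \noise_\Omega$. Write $u \coloneqq \forward_{\Omega} f(\rec_{\Omega}^c \data_{\Omega}^c) - \forward_\Omega \signal$. Then
\[
\forward_{\Omega} f(\rec_{\Omega}^c \data_{\Omega}^c) - \data_{\Omega} = u - \noise_\Omega .
\]
Expanding the square gives
\[
\E\|u-\noise_\Omega\|^2 = \E\|u\|^2 - 2\,\E\langle u,\noise_\Omega\rangle + \E\|\noise_\Omega\|^2 .
\]
The first term is exactly the supervised risk. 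The last term is the noise term, which I read as $\E\|\noise_\Omega\|^2$; it is the restriction of $\noise$ to $\Omega$, and it is independent of $f$ in any case.

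The main step is showing $\E\langle u,\noise_\Omega\rangle = 0$. The quantity $u$ is a function of $\signal$ and of $\data_\Omega^c = \forward_\Omega^c\signal + \noise_\Omega^c$ only, so it depends on $(\signal,\noise_{\Omega^c})$. I will use the independence assumption in the form "$\noise_\Omega$ is independent of $(\signal,\noise_{\Omega^c})$, with $\E[\noise_\Omega \mid \signal]=0$", which is what "pairwise independent with zero mean" is meant to convey, as in Prop.~\ref{prop:n2s} and Prop.~\ref{prop:n2i}. Conditioning on $(\signal,\noise_{\Omega^c})$ then gives
\[
\E\langle u,\noise_\Omega\rangle = \E\big\langle u,\E[\noise_\Omega\mid \signal,\noise_{\Omega^c}]\big\rangle = 0 .
\]
This argument assumes $f$ is measurable and that the relevant second moments are finite, so that the expectations exist and Cauchy–Schwarz justifies splitting the inner product.

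The only real obstacle is interpretive: deciding exactly which independence structure is needed. Coordinatewise pairwise independence alone does not give joint independence of the block $\noise_\Omega$ from the block $\noise_{\Omega^c}$. The cross term, however, is a sum over $p\in\Omega$ of $\E[u_p\,\noise_p]$, so I would state the assumption blockwise, or argue coordinate by coordinate. For each such $p$ we need $\noise_p$ to be mean-zero conditional on $(\signal,\noise_{\Omega^c})$, and I would note this explicitly as the precise hypothesis being used. Once that is fixed, the identity follows immediately, and consequently the supervised and self-supervised risks share their minimizers.
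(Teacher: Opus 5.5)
Your proof is correct and follows the argument the paper gives for Theorem~\ref{thm:split}, of which this statement is the single-partition case: expand $\|u-\noise_\Omega\|^2$ around $\forward_\Omega\signal$ and kill the cross term because $\noise_\Omega$ is mean-zero given everything $u$ depends on (the paper treats the $\forward_\Omega\signal$ and $\forward_\Omega f(\rec_\Omega^c\data_\Omega^c)$ parts of the cross term separately, where you use a single conditioning step). Your two caveats are also right: for a single $\Omega$ the identity really carries $\E\|\noise_\Omega\|^2$, which becomes $\E\|\noise\|^2$ only after summing over $\Omega\in\mathcal{P}$ as in the theorem, and the hypothesis actually needed is the blockwise condition $\E[\noise_\Omega \mid \signal,\noise_{\Omega^c}]=0$, which coordinatewise pairwise independence alone does not guarantee.
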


Based on Prop.~\ref{prop:s2i},
\(
\E \|
\forward_{\Omega} f(\rec_{\Omega}^c \data_{\Omega}^c)
-
\forward_{\Omega}\signal
\|^2
\)
and
\(
\E \|
\forward_{\Omega} f(\rec_{\Omega}^c \data_{\Omega}^c)
-
\data_{\Omega}
\|^2
\)
share the same minimizer. After computing the minimizer within a chosen function class, inference can be done by
\(
\rec(\data)
\coloneq
|\mathcal{P}|^{-1}
\sum_{\Omega \in \mathcal{P}}
f\bigl(\rec_{\Omega}^c \data_{\Omega}^c\bigr)
\).
The crucial difference compared to the loss in Noise2Inverse is the freedom of the network to operate in the null space of $\forward$. This is a prerequisite for the network to effectively remove undersampling artifacts inherent in $\rec_{\Omega}\forward_{\Omega}$. In numerical reconstruction, such a behavior in indeed observed \cite{gruber2024sparse2inverse,schut2025equivariance2inverse}.

\end{myitem}

\subsection{Proposed SPLIT}

The SPLIT technique consists of several elements that we describe below. It is applicable to any linear or nonlinear inverse problem of the form \eqref{eq:ip}. At its core, it uses multiple partitions in the measurement domain with the intention of addressing different challenges in \eqref{eq:ip}, for example, missing data and denoising. We first derive it in a general framework for nonlinear problems with an arbitrary number of partitions and then specialize it to Double-SPLIT for MSCT.

\begin{myitem}

\item \textsc{Multi-partitioning:}
In contrast to existing self-supervised approaches that use a single partition, we include multiple partitions of $\{1,\dots,P\}$ to potentially account for different tasks. The partitions are denoted by $\mathcal{P}_i$ for $i = 1,\dots,K$ with elements $\Omega_{i,j} \subseteq \{1,\dots,P\}$ for $j=1,\dots,|\mathcal{P}_i|$. We denote the data and forward operator restricted to $\Omega_{i,j}$ and its complement $\Omega_{i,j}^c$ by
\begin{align} \label{eq:split1}
 \data_{i,j} &\coloneqq \data|_{\Omega_{i,j}} \in \mathbb{R}^{| \Omega_{i,j} |} \,, \\
 \label{eq:split2}
 \data_{i,j}^c &\coloneqq \data|_{\Omega_{i,j}^c} \in \mathbb{R}^{| \Omega_{i,j}^c |} \,, \\
 \label{eq:split3}
 \forward_{i,j} \signal 
&\coloneqq (\forward \signal)|_{\Omega_{i,j}} \,, \\
 \label{eq:split4}
  \forward_{i,j}^c \signal 
  &\coloneqq (\forward \signal)|_{\Omega_{i,j}^c} \,.
\end{align}
This yields measurement-domain splitting similar to Noise2Inverse and Sparse2Inverse, now generalized to multiple partitions.

\item \textsc{Partial reconstructions:}
For each partition $\mathcal{P}_i$ and subset $\Omega_{i,j} \in \mathcal{P}_i$, we consider fixed reconstruction operators
\begin{align} \label{eq:split5}
& \rec_{i,j} \colon \mathbb{R}^{| \Omega_{i,j} |} \to \mathbb{R}^N, \\
\label{eq:split6}
& \rec_{i,j}^c \colon \mathbb{R}^{| \Omega_{i,j}^c |} \to \mathbb{R}^N.
\end{align}
Reconstruction pairs $\rec_{i,j}^c \data_{i,j}^c$ and $\rec_{i,j} \data_{i,j}$ then provide data allowing self-supervised training. In the spirit of Noise2Inverse, one would use these for denoising with the norm in the reconstruction space $\R^N$, thereby fixing the null space component of the solution selection. In the spirit of Sparse2Inverse, one supervises using a data-space loss, thus allowing more freedom for the network to select solutions with  modified null space component. Further note that we do not aim for $\rec_{i,j}$ and $\rec_{i,j}^c$ to be denoising. Ideally, they are right inverses with
$\forward_{i,j}\rec_{i,j}\forward_{i,j} = \forward_{i,j}$
and
$\forward_{i,j}^c\rec_{i,j}^c\forward_{i,j}^c = \forward_{i,j}^c$.
Due to the limited-data nature of $\forward_{i,j}$ and $\forward_{i,j}^c$, there are plenty of such maps.

\item \textsc{Self-supervised loss:}  
A map $f \colon \mathbb{R}^N \to \mathbb{R}^N$, chosen from a certain family, is trained to map the auxiliary input $\rec_{i,j}^c \data_{i,j}^c$ to the auxiliary output $\rec_{i,j} \data_{i,j}$. For that purpose we  consider
\begin{align} \label{eq:multiX}
\loss^{K}_X(f)
&=
\frac{1}{K} \sum_{i=1}^{K} 
\sum_{j=1}^{|\mathcal{P}_i|}
\left\| f\big( \rec_{i,j}^c(\data_{i,j}^c) \big) - \rec_{i,j}(\data_{i,j}) \right\|^2,
\\ \label{eq:multiY}
\loss^{K}_Y(f)
&=
\frac{1}{K} \sum_{i=1}^{K} 
\sum_{j=1}^{|\mathcal{P}_i|}
\left\| \forward_{i,j}\!\left( f\big( \rec_{i,j}^c(\data_{i,j}^c) \big) \right) - \data_{i,j} \right\|^2.
\end{align}
Similar to Sparse2Inverse, \eqref{eq:multiY} is a self-supervised loss in measurement space, whereas \eqref{eq:multiX} is a self-supervised loss in image space. While \eqref{eq:multiY} is our proposal, we will test both loss terms in the context of MSCT.

\item \textsc{SPLIT-inference:}  
For inference, in both cases we use the reconstruction map
\begin{equation} \label{eq:multisplit-inf}
\rec(\data) \coloneq 
\frac{1}{K} \sum_{i=1}^{K}   
\frac{1}{|\mathcal{P}_i|} \sum_{j=1}^{|\mathcal{P}_i|}
f_\theta\big( \rec_{i,j}^c \data_{i,j}^c \big),
\end{equation}
i.e., the average of all individual reconstructions across partitions and subsets. For linear problems and for a single partition this reduces to the inference used in Sparse2Inverse and Noise2Inverse, respectively.
\end{myitem}

For a single partition ($K=1$), we refer to \eqref{eq:multiY}, \eqref{eq:multisplit-inf} as single-SPLIT, which is the straightforward extension of Sparse2Inverse to nonlinear problems. The use of multiple partitions naturally allows different splits for different tasks, such as denoising and missing-data recovery; the Double-SPLIT employs two complementary partitions, such as angular and detector domains in MSCT.

\subsection{Theoretical analysis}

Similar to single-split methods for linear inverse problems, we show that the self-supervised data-space loss coincides with its supervised counterpart.

\begin{theorem}[Multi-SPLIT] \label{thm:split}
Assume the measurement model \eqref{eq:ip} with a nonlinear forward map $\forward \colon \R^N \to \R^P$, where the noise $\noise$ has pixel-wise independent mean-zero entries. Let $(\mathcal{P}_i)_{i=1}^{K}$ be multiple data partitions and let the restricted data $\data_{i,j}, \data_{i,j}^c$ and partial forward and reconstruction maps $\forward_{i,j}, \forward_{i,j}^c, \rec_{i,j}, \rec_{i,j}^c$ be defined by \eqref{eq:split1}\textendash\eqref{eq:split6}. Then, for any $f \colon \R^N \to \R^N$, 
\begin{multline} \label{eq:split-thm}
\frac{1}{K} \sum_{i=1}^{K} 
\sum_{j=1}^{|\mathcal{P}_i|}
\E \Bigl[ \bigl\| \forward_{i,j} \big( f \big( \rec_{i,j}^c (\data_{i,j}^c) \big) \big) - \data_{i,j} \bigr\|^2 \Bigr]
\\ =
\frac{1}{K} \sum_{i=1}^{K}  
\sum_{j=1}^{|\mathcal{P}_i|}
\E \Bigl[  \bigl\| \forward_{i,j} \big( f \big( \rec_{i,j}^c (\data_{i,j}^c) \big) \big) - \forward_{i,j} \signal \bigr\|^2
\Bigr]
+ \E \bigl[  \|\noise\|^2 \bigr].
\end{multline}
\end{theorem}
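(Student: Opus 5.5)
The plan is to prove the identity term by term, for each fixed pair $(i,j)$, and then sum. Fix $i$ and $j$ and abbreviate $u \coloneqq \forward_{i,j}\big(f(\rec_{i,j}^c(\data_{i,j}^c))\big) - \forward_{i,j}\signal$. By \eqref{eq:ip} and \eqref{eq:split1}, \eqref{eq:split3} we have $\data_{i,j} = \forward_{i,j}\signal + \noise_{i,j}$, where $\noise_{i,j} \coloneqq \noise|_{\Omega_{i,j}}$. Expanding the square gives
\[
\E\bigl\|u - \noise_{i,j}\bigr\|^2 = \E\|u\|^2 - 2\,\E\langle u, \noise_{i,j}\rangle + \E\|\noise_{i,j}\|^2 .
\]
The key step is to show that the cross term vanishes. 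The random vector $u$ is a measurable function of $\signal$ and of $\data_{i,j}^c = \forward_{i,j}^c\signal + \noise|_{\Omega_{i,j}^c}$, so it depends only on $\signal$ and on the noise entries indexed by $\Omega_{i,j}^c$. This is exactly the diagonal-free structure behind Prop.~\ref{prop:n2s} and Prop.~\ref{prop:s2i}. We then argue that $\noise_{i,j}$ is independent of $(\signal, \noise|_{\Omega_{i,j}^c})$, that it has mean zero conditionally on them, and therefore
\[
\E\langle u,\noise_{i,j}\rangle = \E\bigl\langle u, \E[\noise_{i,j}\mid \signal,\noise|_{\Omega_{i,j}^c}]\bigr\rangle = 0 .
\]

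This independence requirement is the main obstacle. The hypothesis only states that the entries of $\noise$ are independent with mean zero, while in Poisson-type models the noise is signal dependent. I would therefore read the hypothesis as follows: conditionally on $\signal$, the entries $\noise_p$ are independent with $\E[\noise_p\mid\signal]=0$. Under this reading, $\E[\noise_{i,j}\mid \signal,\noise|_{\Omega_{i,j}^c}] = \E[\noise_{i,j}\mid\signal] = 0$, and the argument above goes through. If instead $\noise$ is assumed independent of $\signal$, the same computation works directly. I would make this interpretation explicit in the proof and note that it is the same assumption implicitly used in Props.~\ref{prop:n2s}--\ref{prop:s2i}. Apart from this, one only needs square integrability so that all expectations are finite.

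Summing the per-$(i,j)$ identities with the weights $1/K$ gives the noise contribution
\[
\frac1K\sum_{i=1}^K\sum_{j=1}^{|\mathcal P_i|}\E\|\noise_{i,j}\|^2 .
\]
Since each $\mathcal P_i$ is a partition of $\{1,\dots,P\}$, we have $\sum_j \|\noise_{i,j}\|^2 = \|\noise\|^2$ for every $i$, so the double sum collapses to $\frac1K\cdot K\,\E\|\noise\|^2 = \E\|\noise\|^2$. This yields \eqref{eq:split-thm}. The argument never uses any property of $\forward$, $f$, or the maps $\rec_{i,j}, \rec_{i,j}^c$ beyond measurability, which is why nonlinearity causes no difficulty.
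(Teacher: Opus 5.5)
Your proof is correct and follows the same route as the paper's. You fix $(i,j)$, write $\data_{i,j}=\forward_{i,j}\signal+\noise_{i,j}$, expand the square, and kill the cross term because $\forward_{i,j}f(\rec_{i,j}^c\data_{i,j}^c)-\forward_{i,j}\signal$ depends only on $\signal$ and $\noise|_{\Omega_{i,j}^c}$, while $\noise_{i,j}$ is mean zero and independent of these. You also add three clarifications the paper leaves implicit: you handle the cross term in one conditioning step, you state the noise assumption explicitly (the paper tacitly assumes $\noise_{i,j}$ is independent of $\signal$), and you spell out $\sum_j\|\noise_{i,j}\|^2=\|\noise\|^2$ to obtain the final $\E\|\noise\|^2$ term.
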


\begin{proof}
It suffices to show the identity for a fixed partition and a selected set of the partition. To that end, fix $i \in \{1, \dots, K\}$ and $j \in \{1, \dots , |\mathcal{P}_i|\}$, and consider
\(
\E    \| \forward_{i,j}  ( f  ( \rec_{i,j}^c \data_{i,j}^c  ) ) - \data_{i,j}  \|^2  
\).
Using  $\data = \forward  (\signal) + \noise$ yields  
\begin{multline} \label{eq:split-aux}
\E 
\Bigl[ \big\|
\bigl(
\forward_{i,j} f(\rec_{i,j}^c \data_{i,j}^c) - \forward_{i,j} \signal \bigr)
- 
\noise_{i,j}
\big\|^2
\Bigr]
\\
= \E \Bigl[
 \bigl\| \forward_{i,j} f(\rec_{i,j}^c \data_{i,j}^c) - \forward_{i,j} \signal \bigr\|^2 \Bigr] 
+ 
2 \E \Bigl[
 \langle \forward_{i,j} f(\rec_{i,j}^c \data_{i,j}^c) - \forward_{i,j} \signal , \noise_{i,j} \rangle \Bigr] 
+ \E \bigl[
 \| \noise_{i,j} \|^2 \bigr].
\end{multline}
Now $\forward_{i,j} \signal$ is independent of $\noise_{i,j}$ and $\noise_{i,j}$ has zero mean, thus $\E \big[
 \langle \forward_{i,j} \signal , \noise_{i,j} \rangle \big] = 0$. Moreover, by construction, $\forward_{i,j} f(\rec_{i,j}^c \data_{i,j}^c)$ does not depend on $\noise_{i,j}$, and by the componentwise independence of the noise we have $\E \big[
 \langle \forward_{i,j} f(\rec_{i,j}^c \data_{i,j}^c) , \noise_{i,j} \rangle \big] = 0$. Combining these with \eqref{eq:split-aux} and summing over all $i,j$ gives \eqref{eq:split-thm}.
\end{proof}

We do not believe that a similar theorem can be derived for the Noise2Inverse-type loss~\eqref{eq:multiX}; we conjecture this is due to the necessarily nonlinear nature of the reconstruction maps $\rec_{i,j}$. Noise2Inverse for linear problems does not suffer from this limitation as long as linear reconstruction maps are used. 

Theorem \ref{thm:split} may be interpreted as follows: For any $i \in \set{1, \dots, N}$, the map $g_{i}(\data) \coloneqq ( (\forward_{i,j} \circ f \circ \rec_{i,j}^c)(\data_{i,j}^c) )_{j=1}^{|\mathcal{P}_i|}$ is diagonal-free with respect to the partition $\mathcal{P}_i$ in measurement space, enabling denoising there in the spirit of Noise2Self. The crucial ingredient for image reconstruction, however, is that the maps $g_{i}$ contain learned factors $f \circ \rec_{i,j}^c$ that map from measurement space to image space. The maps $f \circ \rec_{i,j}^c$ are then used to define a learned self-supervised reconstruction map via \eqref{eq:multisplit-inf}.

\section{Application to MSCT}
\label{sec:msct-realization}

In this section we present a concrete realization of SPLIT for MSCT using one and two partitions, referred to as Single-SPLIT and Double-SPLIT, respectively.  We present numerical details, experimental results, and comparisons with an iterative one-step algorithm and a self-supervised baseline.

\subsection{MSCT forward and inverse model}
\label{sec:msct}

We start with a brief derivation of the MSCT forward model, resulting in a nonlinear inverse problem of the form \eqref{eq:ip}. This includes modeling spectral measurements of the energy-dependent attenuation distribution and combining them with a material representation along the energy dimension. Additionally, we present an efficient one-step reconstruction algorithm for MSCT, used in the implementation of SPLIT.

\begin{myitem}\item \textsc{Spectral  Radon transform:}
Classical CT seeks to recover a scalar-valued discrete attenuation map $\mu \in \R^N$ for the patient or sample from observations of its (linear) Radon transform $\radon \mu \in \R^P$, obtained by comparing incoming and outgoing X-ray intensity. However, the energy-dependent absorption characteristics of real materials and the broadband nature of X-ray sources necessitate a family of attenuation maps $\mu(\ee,\cdot) \in \R^N$, parameterized by photon energy $\ee > 0$, for accurate modeling. Moreover, taking into account the source spectrum of emitted X-rays and the detector’s spectral response leads to measurements of the form
\begin{equation}\label{eq:data-mu}
  \data(\ell) \coloneq \int_{0}^{\infty} s(\ee)\,\exp\!\bigl(-\,(\radon \mu(\ee, \ell))\bigr)\,d\ee \,,
\end{equation}
where $s(\ee)$ is the effective spectrum (the product of the source spectrum and the detector spectral response), $\radon \mu(\ee, \ell)$ is the Radon transform of $\mu(\ee, \cdot)$ in the spatial variable, and $\ell$ is a ray  of integration. Due to spectral mixing across energies, recovering $(\mu(\ee,\cdot))_{\ee>0}$ from a single effective spectrum is severely ill-posed, leading to non-uniqueness and beam-hardening artifacts.
Multispectral CT (MSCT) mitigates these issues by acquiring projection data across multiple energy bands using distinct effective spectra $(s_b(\ee))_{b=1}^B$. Assuming a low-dimensional model along the energy axis (e.g., a material basis; see \eqref{eq:mu-ansatz} below), these measurements enable accurate reconstruction of energy-dependent attenuation.

\item \textsc{MSCT forward problem:}
We consider MSCT within the material decomposition framework in a discrete setting. The core assumption is that the energy-dependent spatially varying attenuation distribution decomposes as
\begin{equation}\label{eq:mu-ansatz}
  \mu(\ee,\cdot) = \sum_{m=1}^M \mu_m(\ee)\,\signal[m],
\end{equation}
where $\signal[m] \in \R^{N}$ are the discrete material density maps, and $\mu_m(\ee)$ are known tabulated spectral attenuation coefficients for the $m$th material, $m=1,\dots,M$. Assuming a finite number of rays and a finite number of effective spectra, and discretizing the integral over the spectral variable, yields the MSCT measurements
\begin{align}\label{eq:data1}
  \data[b]_{p}
  &\coloneqq \Phi_b\!\bigl((\radon \signal)_{p}\bigr), \\
  \Phi_b(z_p)
  &\coloneqq \sum_{i=1}^E s_{b,i}\,
    \exp\!\Bigl(- \sum_{m=1}^M \mu_{m,i}\, z[m]_p \Bigr),
\end{align}
for all $(p,b) \in \{1,\dots,P\} \times \{1,\dots,B\}$. Here $ (s_{b,i})_{i=1}^E$ are discretized effective spectra,  $\radon$ is applied separately  to each channel of $\signal \in \R^{N \times M}$, and $\Phi \colon \R^M \to \R^B$ is applied independently to each ray measurement, with $z_p = (\radon \signal)_p \in \R^M$, along the channel dimension. After spectral normalization, we assume $\sum_{i=1}^E s_{b,i} = 1$ for all $b$. Accounting for noise, we obtain the inverse problem \eqref{eq:ip} with $\forward = \Phi \circ \radon \colon \R^{N \times M} \to \R^{P \times B}$, resulting in an inherently nonlinear forward model.

\item \textsc{One-step reconstruction (CP-fast):}
In \cite{prohaszka2024derivative}, we propose a simple yet effective iterative algorithm for MSCT image reconstruction. With
$\Mo \coloneq (\mu_{i,m})_{i,m} \in \R^{E \times M}$ and
$\So \coloneq (s_{b,i})_{b,i} \in \R^{B \times E}$ denoting the matrices of discretized material attenuations and effective spectra, the MSCT forward map is written as 
\begin{align}\label{eq:forward}
  \forward(\signal)
  &= \Phi(\radon \signal),
  \quad \text{with}\quad
  \Phi(z)
  \coloneqq \eexp\!\bigl(-\, z\, \Mo^\top \bigr)\, \So^\top,
\end{align}
where $\eexp$ denotes the pointwise exponential.  We further write $\derivative^\ddagger$ for the pseudoinverse of $\derivative = \So \Mo$. The CP-fast algorithm of \cite{prohaszka2024derivative} then reads
\begin{equation}\label{eq:CP-fast}
  \signal^{(k+1)}
  = \signal^{(k)}
    - s_k \,\radon^{\top}\,
      \bigl(\llog \forward(\signal^{(k)}) - \llog \data\bigr)\,
      (\derivative^\ddagger)^{\top},
\end{equation}
with initialization $\signal^{(0)} = 0$ and step sizes $s_k$, where $\llog$ denotes the pointwise logarithm and $\radon^{\top}$ the adjoint of the (discrete) Radon transform. The strength of \eqref{eq:CP-fast} is that it neither requires computation of the derivative of $\forward$ nor inversion related to the full forward map, and it contains a preconditioner $(\derivative^\ddagger)^{\top} \in \R^{B \times M}$ that is of small size and can be precomputed easily. Despite its simplicity, CP-fast has been demonstrated in \cite{prohaszka2024derivative} to converge rapidly with numerically cheap iteration steps, outperforming tested one-step algorithms compared in \cite{Mory_2018}.

\item \textsc{Learned MSCT reconstruction:}
Let us also  comment on existing work on learning for MSCT. Early works focused on supervised methods, including \cite{clark2018multi,gong2020deep,abascal2021material,wu2021deep}. More closely related to SPLIT are recently proposed self-supervised approaches for MSCT such as \cite{fang2021iterative,chen2023improving,inkinen2022unsupervised,ji2024senas,he2022spectral2spectral,kumrular2024unsupervised}. However, all these methods are based on two-step reconstruction pipelines that separate image formation and material decomposition. In contrast, we target joint image formation and material decomposition in a single step, by combining one-step iterative image reconstruction with self-supervised learning. To the best of our knowledge, no self-supervised method in such a joint setting has been proposed.   

\end{myitem}

\subsection{Double-SPLIT for MSCT}
\label{sec:double-split}

The realization of SPLIT for MSCT requires specifying the data partitions, and the choice of reconstruction maps. We focus here on  the Double-SPLIT version using two data partitions. The Single-SPLIT version can be derived as a simplification of Double-SPLIT  as indicated below.

\begin{myitem}

\item \textsc{Dual-Partitioning:}
We split the measurements $\data$ into two partitions along the spatial detector index $P$, and apply the same partitioning to all energy bins. We reshape data as $\data \in \R^{P_1 \times P_2 \times M}$ where the first index is the angular direction, the second is the offset direction in the sinograms, and the third is the unchanged energy-bin coordinate. We then take partitions $\mathcal{P}_1 = \{\Omega_{1,1}, \Omega_{1,2}\}$ and $\mathcal{P}_2 = \{\Omega_{2,1}, \Omega_{2,2}\}$ with 
\begin{align*}
    \Omega_{1,1} 
    &\coloneq  
    ((2\mathbb{Z}+1) \cap \{1,2,\dots,P_1\})
    \times \{1,2,\dots,P_2\}
    \times \{1,2,\dots,M\} \,,   
    \\
    \Omega_{1,2} 
    &\coloneq  
    ((2\mathbb{Z}) \cap \{1,2,\dots,P_1\})
    \times \{1,2,\dots,P_2\}
    \times \{1,2,\dots,M\} \,,
    \\
    \Omega_{2,1} 
    &\coloneq  
    \{1,2,\dots,P_1\}
    \times ((2\mathbb{Z}+1) \cap \{1,2,\dots,P_2\})    
    \times \{1,2,\dots,M\} \,,
    \\
    \Omega_{2,2} 
    &\coloneq  
    \{1,2,\dots,P_1\} \times ((2\mathbb{Z}) \cap \{1,2,\dots,P_2\})
    \times \{1,2,\dots,M\} \,,
\end{align*}
which yields data splits in angular and projection direction, respectively     
\begin{align} \label{eq:double11}
    \data_{1,1}(i,j,m) 
    &\coloneq  \data(i,2j,m) \,,
    \\ \label{eq:double12}
    \data_{1,2}(i,j,m) 
    &\coloneq  \data(i,2j - 1,m) \,,
     \\ \label{eq:double21}
    \data_{2,1}(i,j,m) &\coloneq \data(2i,j,m) \,,
    \\ \label{eq:double22}
     \data_{2,2}(i,j,m) &\coloneq \data(2i - 1, j,m) \,.    
\end{align}

\item \textsc{Partial reconstructions:}
Partial initial reconstructions are all defined by $K$ steps of the CP-fast algorithm \eqref{eq:CP-fast}, 
\begin{equation} \label{eq:doublesplt-rec}
\rec_{i,j} \data_{i,j} = \signal^{(K)} \,.
\end{equation}
For the angular split data \eqref{eq:double11}, \eqref{eq:double12} we actually apply \eqref{eq:CP-fast} with the restricted data \eqref{eq:double11}, \eqref{eq:double12} and angular-restricted forward maps $\forward_{1,1}$ and $\forward_{1,2}$, respectively, instead of $\forward$. For the offset-direction split data \eqref{eq:double21}, \eqref{eq:double22} we first apply linear interpolation to $\data_{2,1}$ and $\data_{2,2}$ and then apply \eqref{eq:CP-fast} with the full forward operator $\forward$ to the interpolated data.

\item \textsc{Training and inference:}
To all $\rec_{i,j} \data_{i,j}$ we apply $f \colon \R^{N \times B} \to \R^{N \times M}$ which is defined by a U-net $\Phi \colon \R^{N} \to \R^{N}$ \cite{ronneberger2015u} taken the same in each channel, $f(\signal) = \big( \Phi \signal[m] \big)_m$. The U-net network is trained jointly
in data space by minimizing  
\begin{equation} \label{eq:doublesplit}
\loss_Y^2(\Phi) = 
\frac{1}{2}
\sum_{i,j=1}^2
\sum_{m=1}^M
\left\|
\mathcal{A}_{i,j}\, \Phi_\theta\!\big( (\rec_{i,j}^c \data_{i,j}^c)[m] \big) - \data_{i,j}[m]
\right\|^2 \,.
\end{equation}
Inference is done by \eqref{eq:multisplit-inf}, which in the above situation becomes
\begin{equation} \label{eq:doublesplit-inf}
\rec(\data)[m] 
\coloneq 
\frac{1}{4} 
\sum_{i,j=1}^2
\Phi_\theta \bigl( (\rec_{i,j} \data_{i,j})[m] \bigr) \,.
\end{equation}

\item \textsc{Earlystopping:}
The loss \eqref{eq:doublesplit} is data-driven and exploits the available samples. Our proposed approach involves comparing $\Phi_\theta (\rec (\data_{i,2}))$ with $\Phi_\theta (\rec (\data_{i,1}))$ in terms of the Peak Signal-to-Noise Ratio (PSNR).
More precisely, we define an early stopping metric as the average PSNR
\begin{equation} \label{eq:stop}
\overline{\psnr} \coloneqq \frac{1}{2M} \sum_{i=1}^{2} \sum_{m=1}^M \psnr \big(\Phi_\theta ((\rec \data_{i,2})[m]), \Phi_\theta (( \rec \data_{i,1})[m])\big).
\end{equation}
We terminate the training when this metric is minimized. 
The underlying rationale is rooted in the dynamics of model overfitting: as the model begins to overfit, its outputs for different target domains start to diverge. Specifically, since $\Phi_\theta (\rec (\data_{1,2}))$ and $\Phi_\theta (\rec (\data_{1,1}))$ are trained on disjoint targets, their outputs become increasingly dissimilar after overfitting; a similar divergence occurs with $\Phi_\theta (\rec (\data_{2,1}))$ and $\Phi_\theta (\rec (\data_{2,2}))$. By monitoring these cross-domain output differences through the $\psnr$ metric, we can detect the onset of overfitting.

\end{myitem}

\subsection{Implementation details}

We next specify details for the results presented, concerning MSCT specifications, the network training and the used data sets.         

\begin{myitem}

\item \textsc{MSCT specifications:}
The numerical settings for the MSCT forward map are taken as in \cite{prohaszka2024derivative}, which uses the implementation of \cite{Mory_2018}. We consider $M = 3$ base materials (water, iodine, gadolinium) and $B = 5$ energy bins. The energy variable is discretized on $150$ uniform nodes between $0$ and $150$ keV. Images use $N = 256 \times 256$ pixels, and the Radon transform yields $P = 262{,}450$ line integrals. Noisy measurements $\data$ are generated using a more realistic forward model with Poisson photon statistics and electronic noise added. For initial reconstructions with the full forward model we use CP-Fast with full data.

\item \textsc{Comparison methods:}
We compare the following  reconstruction methods     
\begin{tritemize}
\item CP-Fast: outlined in Section~\ref{sec:msct} (iterative benchmark).
\item X-space loss: self-supervised (single partition and reconstruction-space loss).
\item Single-SPLIT: self-supervised (single partition and measurement-space loss).
\item Double-SPLIT: outlined in Section~\ref{sec:double-split} (two partitions and measurement-space loss).
\end{tritemize}
Here the X-space loss method and Single-SPLIT use the single angular partition $\mathcal{P}_1$ and the reconstructions $\rec_{1,1}$ and $\rec_{1,2}$, with the losses, respectively,
\begin{align*}
\loss_X^1(\Phi) &= 
\sum_{j=1}^2
\sum_{m=1}^M
\left\|
 \Phi_\theta\!\big( (\rec_{1,j}^c \data_{1,j}^c)[m] \big) - (\rec_{1,j} \data_{1,j})[m]
\right\|^2,
\\
\loss_Y^1(\Phi) &= 
\sum_{j=1}^2
\sum_{m=1}^M
\left\|
\forward_{1,j}\, \Phi_\theta\!\big( (\rec_{1,j}^c \data_{1,j}^c)[m] \big) - \data_{1,j}[m]
\right\|^2 \,.
\end{align*}
Inference in both cases uses the average
\[
\rec(\data)[m] \coloneq \frac{1}{2} \sum_{j=1}^2
\Phi_\theta\!\big( (\rec_{1,j}^c \data_{1,j}^c)[m] \big).
\]
Note that the X-space loss method can be seen as an adaptation of Noise2Inverse to non-linear tomography, while  Single-SPLIT can be seen as an adaptation of  Sparse2Inverse to non-linear tomography.

\item \textsc{Dataset:}
We generate a synthetic MSCT dataset comprising $100$ $3$-channel material images $\signal$ and corresponding $5$-channel multispectral measurements $\data = \forward(\signal) + \noise$. Each phantom is derived from a randomly deformed, modified Shepp--Logan phantom, with phantom regions assigned to the material channels. To ensure physically meaningful and numerically stable attenuation modeling, the material coefficients were scaled to reflect realistic concentration levels observed in clinical CT acquisitions. In particular, the iodine and gadolinium channels were normalized to water-equivalent attenuation units and further scaled to represent typical concentrations on the order of a few percent relative to water, while water serves as the reference material and is left unscaled. This choice is practically motivated and ensures that the relative contributions of all channels remain within a realistic and well-conditioned range. We use $60$ phantoms for training, $20$ for validation, and $20$ for testing. The three channels of an example phantom are shown in the left column of Figure~\ref{fig:msct_725}.

\item \textsc{Training:} For the U-net we use the implementation from \url{https://github.com/milesial/Pytorch-UNet}. Training uses Adam with learning rate $10^{-4}$, batch size $1$, for up to $15{,}000$ epochs. For early stopping we monitor the sum \eqref{eq:stop} of the estimated PSNRs across materials and halt training when this sum attains its maximum. We used the same U-net model for all learned methods.

\end{myitem}

\begin{figure}[htb]
        \includegraphics[width=\textwidth]{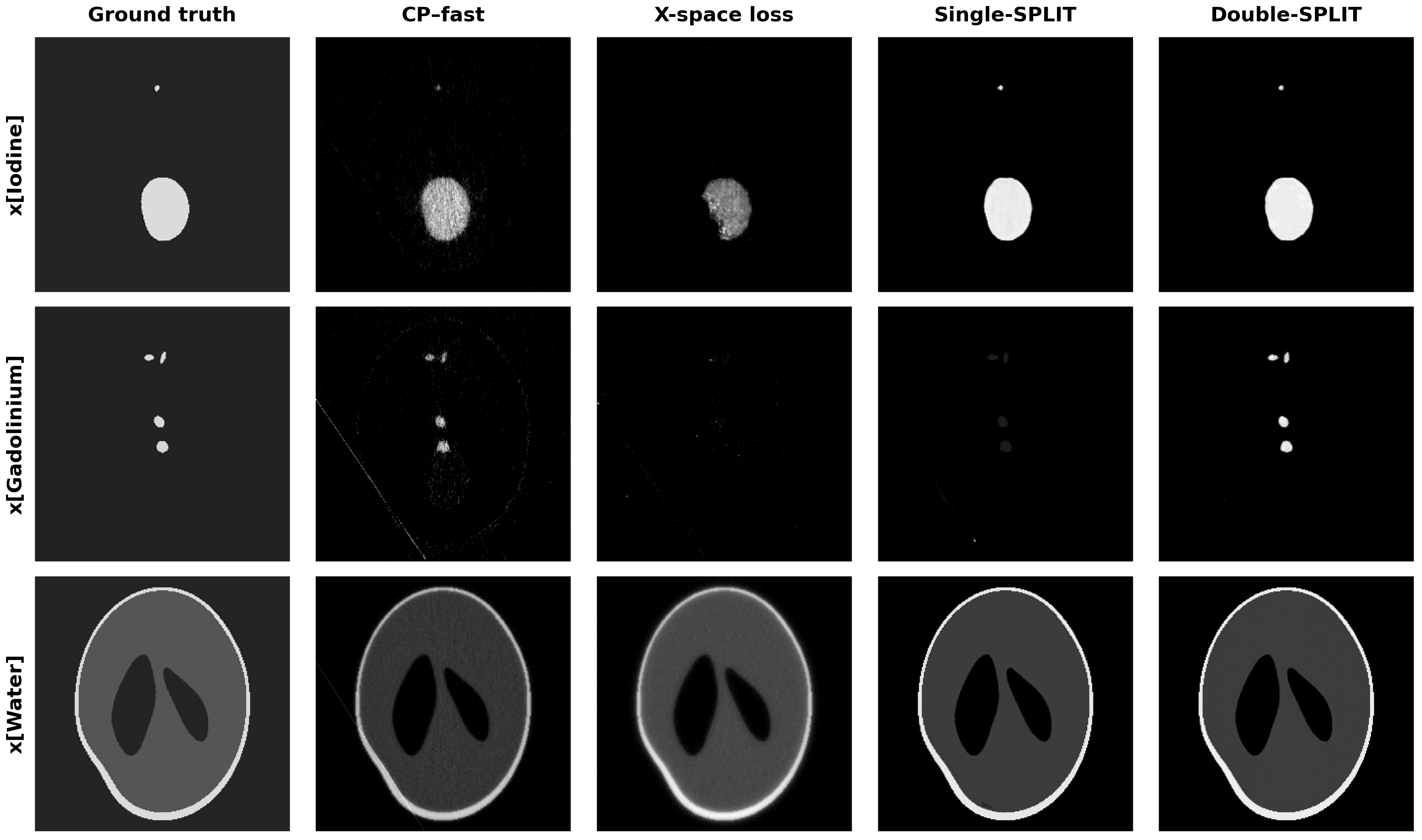}
 \caption{Multispectral CT reconstructions from $N_\theta = 725$ projection angles. Rows: iodine (top), gadolinium (middle), water (bottom). Columns (left to right): ground truth, CP-Fast, N2I, Single-SPLIT, Double-SPLIT.}
  \label{fig:msct_725}
\end{figure}

\begin{figure}[htb]
\includegraphics[width=1\textwidth]{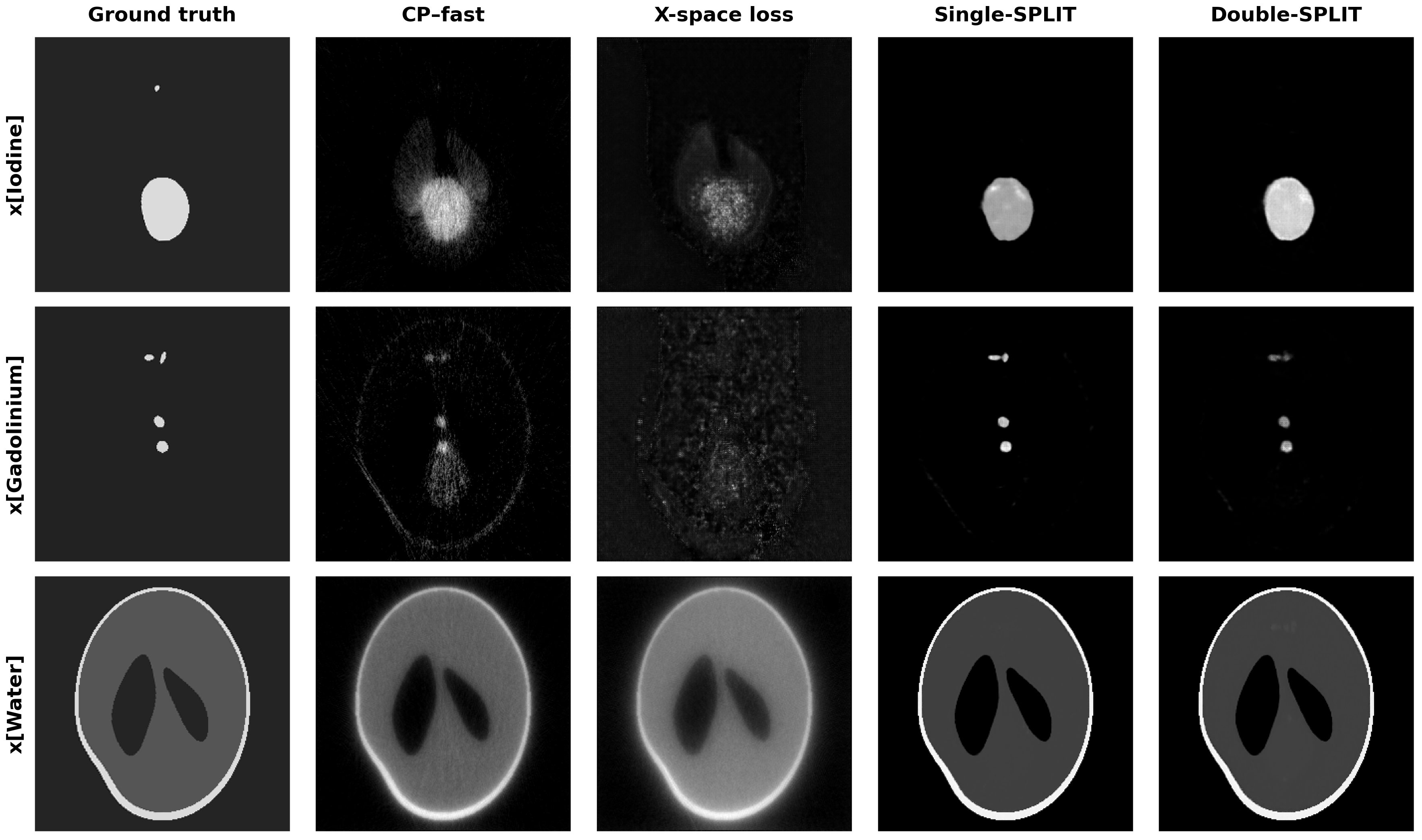}
    
  \caption{Multispectral CT reconstructions from $N_\theta = 145$ projection angles. Rows and columns as in Fig.~\ref{fig:msct_725}.}
  \label{fig:msct_145}
\end{figure}

\begin{figure}[htb] 
        \includegraphics[width=1\textwidth]{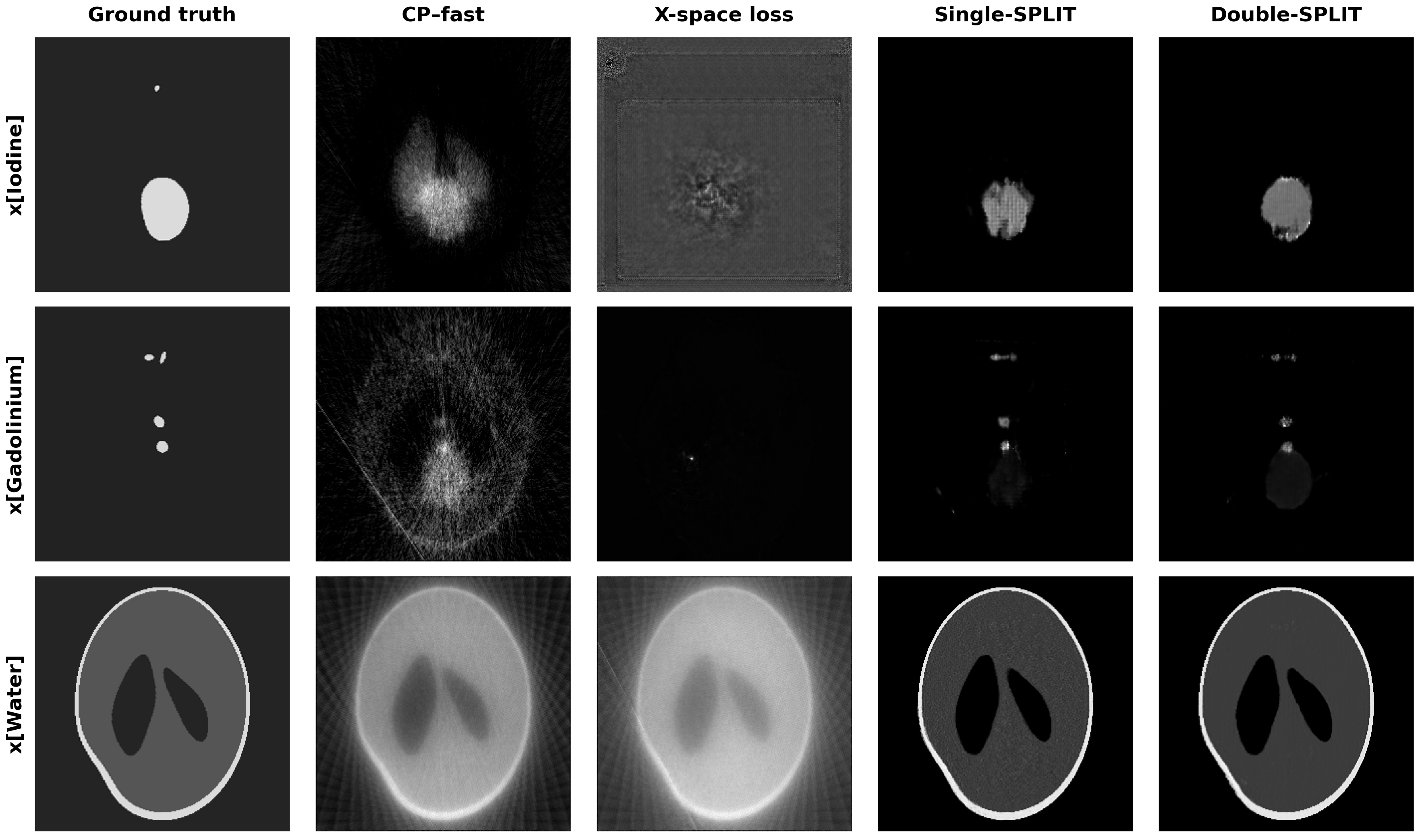}
     \caption{Multispectral CT reconstructions from $N_\theta = 29$ projection angles. Rows and columns as in Fig.~\ref{fig:msct_725}.}    
    \label{fig:msct_29} 
\end{figure}

\subsection{Results}

Next, we present reconstruction results across three levels of angular sampling: $N_\theta \in \{725,145,29\}$ views. Visual results are shown in Figs.~\ref{fig:msct_725}, \ref{fig:msct_145}, and \ref{fig:msct_29}. PSNR and SSIM for each method and view count are reported in Table~\ref{tab:metric}, with values given separately for each material. We observe that Single-SPLIT and Double-SPLIT consistently outperform the iterative CP-Fast baseline and the self-supervised X-space loss method across metrics and materials. Under ultra-sparse sampling ($29$ views), Double-SPLIT performs best overall, whereas for denser sampling ($725$ or $145$ views) Single-SPLIT is slightly superior.

\begin{table}[htb]
  \caption{PSNR and SSIM for three materials at different view counts (higher is better). PSNR is reported in dB. Best value per material and view count is in bold.}
  \label{tab:metric}
  \centering
  \footnotesize
  \setlength{\tabcolsep}{5pt}
  \renewcommand{\arraystretch}{1.15}
  \begin{tabular}{
    @{} c l
    S[table-format=2.2] S[table-format=2.2] S[table-format=2.2]
    S[table-format=1.2] S[table-format=1.2] S[table-format=1.2] @{}
  }
    \toprule
    & & \multicolumn{3}{c}{PSNR [dB]} & \multicolumn{3}{c}{SSIM} \\
    \cmidrule(lr){3-5} \cmidrule(lr){6-8}
    \multicolumn{1}{c}{Views} & \multicolumn{1}{c}{Method}
      & \multicolumn{1}{c}{Iodine} & \multicolumn{1}{c}{Gadolinium} & \multicolumn{1}{c}{Water}
      & \multicolumn{1}{c}{Iodine} & \multicolumn{1}{c}{Gadolinium} & \multicolumn{1}{c}{Water} \\
    \midrule
    \multirow{3}{*}{725}
      & X-space loss  (Reference)    & 24.41 & 25.69 & 24.72 & 0.94 & 0.81 & 0.85\\
      & Single-SPLIT (Proposed)  & 35.33 & 28.06 & \bfseries 39.60 & 0.99 & 0.98 & \bfseries 1.00 \\
      & Double-SPLIT (Proposed)
                              & \bfseries 35.51 & \bfseries 37.75 & 37.83
                              & \bfseries 0.99 & \bfseries 0.99 &  0.99 \\
    \midrule
    \multirow{3}{*}{145}
      & X-space loss  (Reference)   & 20.13 & 21.42 & 19.78 & 0.10 & 0.02 & 0.52 \\
      & Single-SPLIT (Proposed) & 30.50 & \bfseries 33.18 & \bfseries 36.23
                              & \bfseries 0.98 & \bfseries 0.98 & \bfseries 0.99 \\
      & Double-SPLIT (Proposed)
                              & \bfseries 31.66 & 32.96 & 35.96
                              & 0.97 & 0.95 & \bfseries 0.99 \\
    \midrule
    \multirow{3}{*}{29}
      & X-space loss (Reference)   & 15.99 & 13.16 & 17.31 & 0.01 & 0.00 & 0.33 \\
      & Single-SPLIT (Proposed)& 23.93 & \bfseries 29.30 & 31.88
                              & 0.95 & \bfseries 0.94 & 0.88 \\
      & Double-SPLIT (Proposed)
                              & \bfseries 25.81 & 26.98 & \bfseries 33.53
                              & \bfseries 0.97 & 0.93 & \bfseries 0.98 \\
    \bottomrule
  \end{tabular}
\end{table}

\section{Conclusion}

Learned image reconstruction has become state of the art across many imaging modalities. Of particular interest are self-supervised methods that require only measurement samples $y = \forward(x) + \noise$ without access to paired data $(x, y)$ or even ground-truth images $x$. While such approaches have primarily focused on linear problems, this work addresses the nonlinear, incomplete, and noisy regime relevant to nonlinear inverse problems. We introduced and realized SPLIT (Self-supervised Partitioning for Learned Inversion in Nonlinear Tomography) for one-step MSCT reconstruction and material decomposition. Our framework leverages measurement-domain losses, cross-partition consistency, and complementary information across spectral channels, together with an automatic stopping criterion that serves as regularization. SPLIT consistently improves reconstruction quality and robustness compared with  iterative reconstruction baseline and a self-supervised baseline using a reconstruction-domain loss.   

Future work will include evaluation on real MSCT datasets with realistic calibration and model mismatch, systematic studies of noise (e.g., Poisson and electronic noise) and sparsity regimes, and theoretical analysis of identifiability, stability, and convergence under nonlinear forward models. We also plan to integrate recent advances in  physics-informed priors, quantify uncertainty, and extend the framework to other nonlinear and ill-posed inverse problems such as quantitative photoacoustic tomography and inverse diffusion (diffuse optical tomography).

\bibliographystyle{unsrt}
\bibliography{refs}

\end{document}